\documentclass{article} 
\usepackage{iclr2026_conference,times}

\usepackage{amsmath,amsfonts,bm}

\def\eqref#1{equation~\ref{#1}}

\def\1{\bm{1}}

\DeclareMathAlphabet{\mathsfit}{\encodingdefault}{\sfdefault}{m}{sl}
\SetMathAlphabet{\mathsfit}{bold}{\encodingdefault}{\sfdefault}{bx}{n}

\newcommand{\E}{\mathbb{E}}

\newcommand{\R}{\mathbb{R}}

\DeclareMathOperator*{\bigcomp}{\bigcirc}

\usepackage{amsthm}
\newcommand{\norm}[1]{\left|\left|\,#1\,\right|\right|}
\newcommand{\mup}{$\mu$P\xspace}
\newcommand{\ep}{\varepsilon}

\newtheorem{lemma}{Lemma}

\usepackage{hyperref}
\usepackage{url}
\usepackage{booktabs}

\usepackage{multirow}
\usepackage{array}
\renewcommand{\arraystretch}{1.5} 

\usepackage{ulem}
\usepackage{xcolor, xspace}
\usepackage{tabularx}
\usepackage{graphicx}
\usepackage{adjustbox}
\usepackage{enumitem}
\usepackage{wrapfig}
\usepackage{subcaption}

\usepackage[utf8]{inputenc}
\DeclareUnicodeCharacter{03BC}{\ensuremath{\mu}}

\title{GQA-\mup: The Maximal Parameterization Update for Grouped Query Attention}

\author{
    Kyle~R.~Chickering\thanks{Equal contribution} \\
    UC Davis \& MBZUAI IFM
    \And
    Huijuan~Wang\footnotemark[1] \\
    USC \& MBZUAI IFM
    \And
    Mengxi~Wu\footnotemark[1] \\
    USC
    \And
    Alexander~Moreno \\
    MBZUAI IFM
    \And
    Muhao~Chen \\
    UC Davis
    \And
    Xuezhe~Ma \\
    USC \& MBZUAI IFM
    \And
    Daria Soboleva \\
    Cerebras
    \And
    Joel Hestness \\
    Cerebras
    \And
    Zhengzhong~Liu \\
    MBZUAI IFM
    \And
    Eric~Xing \\
    Carnegie Mellon University \& MBZUAI IFM
}

\iclrfinalcopy 
\begin{document}

\maketitle

\begin{abstract}
Hyperparameter transfer across model architectures dramatically reduces the amount of compute necessary for tuning large language models (LLMs). The maximal update parameterization ($\mu$P) ensures transfer through principled mathematical analysis but can be challenging to derive for new model architectures. 
Building on the spectral feature-learning view of \citet{yang2023spectral}, we make two advances.
First, we promote spectral norm conditions on the weights 
from a heuristic to the definition of feature learning, and as a consequence arrive at the Complete-P depth and weight-decay scalings without recourse to lazy-learning. Second, we consider a modified spectral norm 
that preserves 
the valid scaling law of 
network weights when 
weight matrices are not full rank. This 
enables (to our knowledge, the first) derivation of
\mup scalings for grouped-query attention (GQA). We demonstrate the efficacy of our theoretical derivations by showing learning rate transfer across the GQA repetition hyperparameter as well as experiments regarding transfer over weight decay.
\end{abstract}

\section{Introduction}
The maximal update parametrization (\mup) \citep{yangtensoriv, yangtensorv} provides principled rules for zero-shot learning rate transfer across model widths. 
Thus, large terminal model hyperparameters can be determined by sweeping a small proxy model. 
\mup has been used to train models up to at least 13B parameters with zero-shot transfer \citep{blake2023unit, dey2023cerebras, narayan2025mu}. 
Its applicability, however, has been largely limited to learning rate transfer across model widths. 
To broaden this scope, \citet{dey2025don} introduced Complete-P, 
extending the original prescriptions to weight decay and model depth. 
However, many common architectures that are widely deployed in production 
still lack established \mup scalings.

This paper seeks to close this gap by extending the spectral \mup framework of \cite{yang2023spectral} to be more practically useful in deriving \mup prescriptions for novel architectures. As an example of the utility of our framework, we derive (to our knowledge, the first) \mup scaling for grouped-query attention (GQA) \citep{ainslie2023gqa}. Our analysis reveals that GQA surfaces several difficulties that prior work has left unaddressed. First, when using GQA the original \mup implementation passes coordinate checks, 
i.e., the customary correctness tests for the implementation.
However, empirical analysis shows that the original \mup implementation 
fails to transfer learning rates, seemingly contradicting established theory (see Figures~\ref{fig:gqa_ablation_no_fsdp} and ~\ref{fig:dw_adam}). 
We resolve this by extending the spectral-norm version of \mup introduced in \cite{yang2023spectral}, and showing that the original \mup implementation does not pass a more rigorous spectral-norm coordinate check. Second, the 
intrinsic low rank of GQA weight matrices 
skews the expected size of layer outputs. 
To address this issue, we introduce 
a new norm, namely the expected operator norm, to replace the spectral norm in spectral \mup theory and restore the desired scaling behavior.

Our primary contributions are threefold:
\begin{enumerate}
    \item We extend the spectral \mup theory of \citet{yang2023spectral}, which allows derivations of \mup for more advanced architectures like weight decay, recursion blocks, and GQA. Our work provides, 
    to our knowledge, the first derivation of \mup scaling for GQA.
    \item We perform empirical analysis to validate the theory and offer 
    practical guidance for 
    learning rate transfer across GQA settings. In particular, we suggest that 
    transferring across different numbers of GQA repetitions leads to noisy transfer dynamics, suggesting caution when attempting to transfer learning rate.
    \item Our experiments show that, 
    with the correct scalings, both weight decay and the training-time constant $\tau_{\text{epoch}}$ introduced in \cite{wang2024set} appear to be transferable.
\end{enumerate}

\begin{figure*}[!t]
    \centering
    \includegraphics[width=\linewidth]{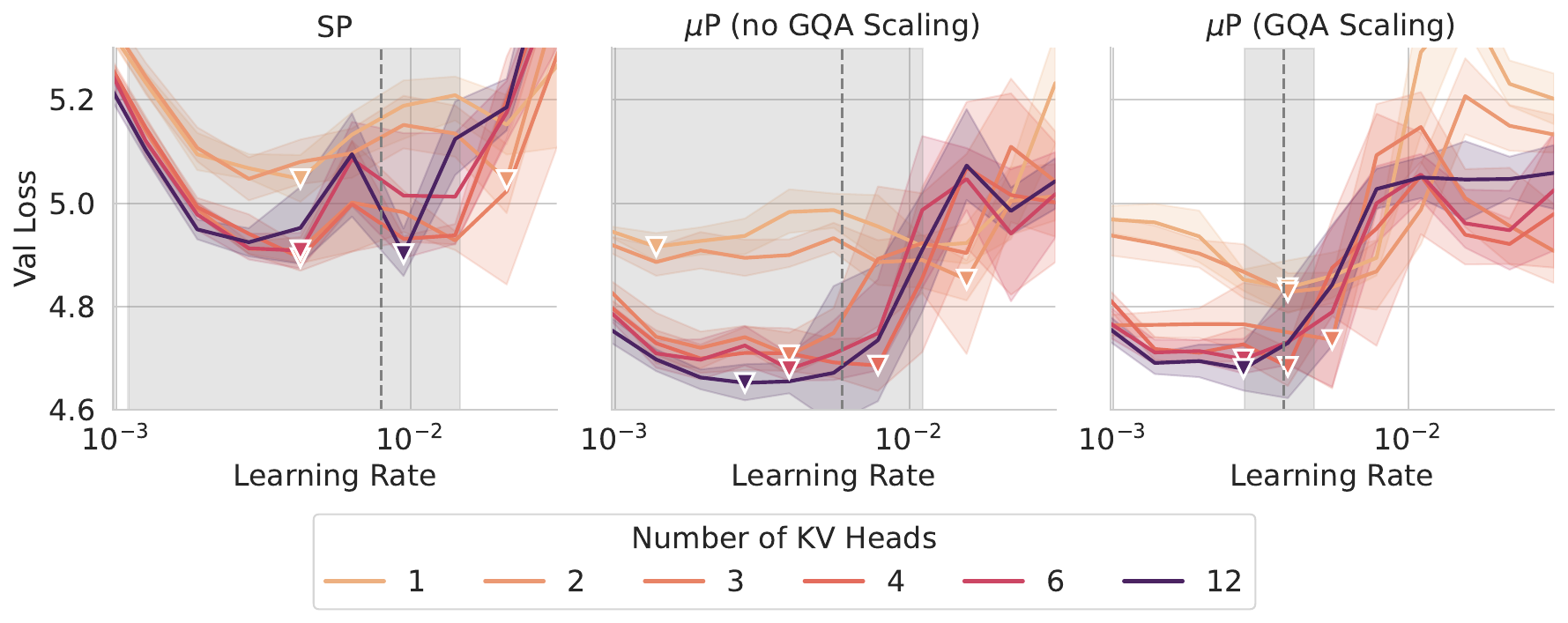}
    \caption{Comparison of the standard parameterization (left), the vanilla Adam-\mup parameterization (middle), and our GQA-\mup scaling (right).
    For a fixed model size, we vary the number of KV heads.
    The dashed lines indicate the mean optimal learning rates for each parameterization, and the shaded grey region denotes the standard deviation of the optimal learning rates.
    All models are trained to 10 tokens per parameter (TPP).
    Additional details can be found in Appendix~\ref{app:gqa_exp}.}
    \label{fig:gqa_ablation_no_fsdp}
\end{figure*}

\section{Related Work}
\textbf{Foundations of \mup:} \mup builds on a series of works by Yang, developing the Tensor Programs framework~\citep{yang2019tensori, yang2020tensorii, yang2020tensoriii, yangtensoriv, yangtensorv, yangtensorvi}. This line of work uses random matrix theory to carefully analyze the mathematical properties of neural networks during training, while also empirically demonstrating that these theoretical approaches remain valuable for real-world deep learning. Within the framework of Tensor Programs, \cite{yangtensorv} derives the well-known \mup scaling laws for width under SGD and Adam training. The final paper in the series~\cite{yangtensorvi} attempts to extend \mup to depth scalings. However, they were unable to extend to the case of residual blocks with standard configurations for the hidden layers. Finally, the foundation of the mathematical framework presented in this work builds on~\cite{yang2023spectral}, who show an alternative derivation of the results in~\cite{yangtensorv} based on spectral norms.

\textbf{Models using GQA:} Group-query attention (GQA)~\citep{ainslie2023gqa} is an efficient attention mechanism that reduces memory usage by sharing key and value heads across groups of query heads. 
Due to its favorable trade-off between memory efficiency and model performance, GQA has been widely used in modern large language models, including Mistral 7B \citep{Jiang2023Mistral7}, LLaMA 3 \citep{grattafiori2024llama},  Qwen3~\citep{yang2025qwen3} and K2-V2~\citep{liu2025k2}. 

\textbf{Extensions of \mup:} The original \mup formulation presented in \cite{yangtensorv} applies only to scaling the width of a fixed depth, fixed batch size neural network. While already a powerful tool, later authors have sought to extend the principles of \mup to cover cases not covered by the original formulation. \cite{dey2023cerebras} do large-scale validation experiments using \mup and find empirical evidence that learning rate can transfer across batch and dataset size. \cite{dey2023cerebras} suggests \mup-type scalings for weight decay, the Adam $\varepsilon$, and depth. Their contributions to depth scaling are most notable, as their empirical findings contradict the scaling presented in~\cite {yangtensorvi}. However, their extensive empirical analysis suggests that the scaling they derive is correct. We arrive at the same scaling in Section~\ref{sec:depth_scaling} using the framework we outline in this paper. More recently~\cite{mlodozeniec2025completed} extended the work of Dey et al. \cite{dey2025don} by extending the SDE parameterization to cover hyperparameter transfer for batch size, as well as showing the value of per-layer learning rate tuning.

\cite{blake2023unit} apply \mup in the context of large-scale, low-precision LLM training. They use ABC parameterizations to apply the \mup scaling rules while maintaining unit variance for all layers in the network, which they refer to as unit scaling-\mup. Additionally, they empirically validate that learning rate transfer persists across datasets, batch sizes, depths, and training iterations under controlled conditions. \cite{narayan2025mu} suggest a different, more simplified version of the unit scaling-\mup which they also show works for training low-precision networks with \mup.

Finally, a related subsequent work \cite{zheng2026spectral} has proposed a similar theoretical framework to ours. Both our work and theirs share the perspective that the spectral norm provides a principled alternative to Tensor Programs \cite{yangtensoriv} for deriving \mup. However our works differ in scope and motivation. \cite{zheng2026spectral} systematically apply their framework to a broad class of optimizers under width and depth scaling, while our work identifies the expected operator norm as necessary to correctly handle rank-degenerate weights and using this norm to provide the first derivation of \mup for GQA.




\section{Deriving Novel Maximal Update Parameterizations}\label{sec:derivation}










Consider a collection of weight matrices $\bm{W}^\ell \in \R^{n_\ell\times m_\ell}$ in a neural network, indexed by layer $\ell$. \cite{yang2023spectral} proves that conditions imposed upon the weight matrices of a network imply feature learning (and thus learning rate transfer) as defined in \cite{yangtensorv} (see Equation~\ref{eq:yang_feature}). 
For initial weights $\bm{W}_0^\ell$ and iterates $\bm{W}_t^\ell = \bm{W}_0^\ell + \sum_{k=1}^t\Delta \bm{W}_t^\ell$, where $\Delta \bm{W}_t^\ell = \bm{W}_t^\ell - \bm{W}_{t-1}^\ell$, \cite{yang2023spectral} suggests that both the initialization and the updates must satisfy: 
\begin{align}\label{eq:spectral_condition}
    &\norm{\bm{W}^\ell_0}=\Theta(\sqrt{n_\ell}/\sqrt{m_\ell}), \quad \norm{\Delta \bm{W}_t^\ell}=\Theta(\sqrt{n_\ell}/\sqrt{m_\ell}),
\end{align}
where $\norm{\bm{W}}:=\sup_{\norm{x}_2=1}\norm{\bm{W}x}_2$ is the usual spectral (or induced) norm. This spectral perspective on feature learning is powerful, and we introduce three minor but important modifications that enable us to extend the method of \cite{yang2023spectral} to cover novel architectures like GQA. 

\begin{figure}[!t]
    \centering
    \footnotesize
    \includegraphics[width=0.65\linewidth]{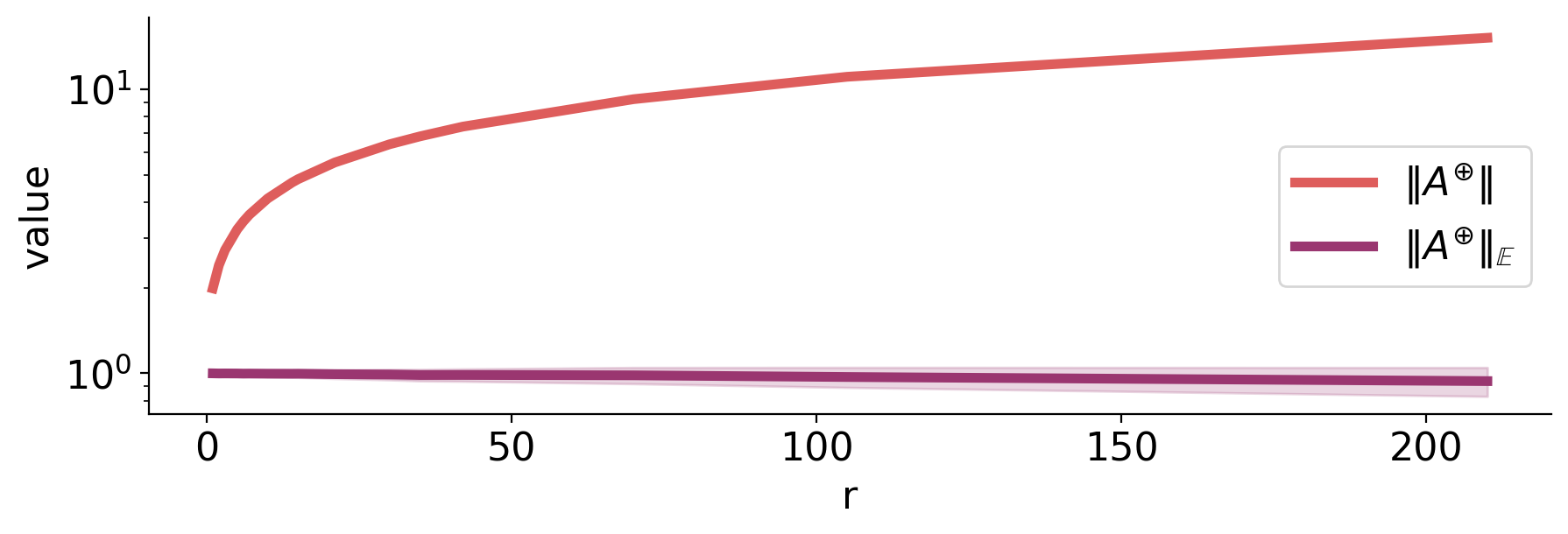}
    \caption{Demonstration of the failure of the spectral norm to accurately capture the behavior for low-rank matrices when the inputs are randomly sampled i.i.d.\ from $\mathcal{N}(0, 1)$.
    $r$ is the number of key-value head repetitions and $r=1$ corresponds to the setting without GQA.
    Each point is averaged over $1000$ independent draws of $\bm{A}$, with the shaded band showing $\pm 1$ standard deviation. 
    }
    \label{fig:expected_norms}
\end{figure}


\textbf{Analysis Under a New Norm:} The spectral norm can be interpreted as the maximal deformation of an input vector induced by an operator $\varphi: \R^m\rightarrow \R^n$. For full-rank operators, such as dense feed-forward layers, random matrix theory shows that the quantitative value of the spectral norm is attained asymptotically. 
In the classical case of an $n\times n$ random matrix $A$, we have the sharp asymptotic relation $\norm{A}=2\sqrt{n}$ as $n\rightarrow \infty$.

However, for rank-degenerate matrices like those used in GQA, the spectral norm is not attained asymptotically in practice. 
The reason is that, as shown by Tensor Programs~\cite{yangtensoriv}, the inputs to a GQA layer during training are i.i.d., and therefore, for rank-degenerate matrices, the vectors that cause this ``maximal deformation" occur with probability zero! A visualization of this discrepancy can be seen in Figure~\ref{fig:expected_norms}.
Instead, we should use a notion of size that reflects the actual deformation encountered during training.

To this end, let $\Omega$ be the probability distribution of the input vectors. We define the \textbf{expectation operator norm} as\footnote{Technically, the object we define as $\norm{A}_{\E, \Omega, p}$ is only a seminorm without further constraints on $\Omega$. In particular, if $\text{supp}\,\Omega\ne\R^n$ then it is possible for all random vectors $x\sim \Omega$ to lie in the nullspace of $A$. This edge case does not occur in neural network training.} 
\begin{align}\label{eq:expected_operator_norm}
    \norm{\bm{A}}_{\E, \Omega, p}:=\E_{x\sim \Omega}\left[\,\frac{\norm{\bm{A}x}_p}{\norm{x}_p}\,\right].
\end{align}
Throughout this paper, we adopt the convention $\norm{\bm{A}}_{E}=\norm{\bm{A}}_{\E, \mathcal{N}(0, 1), 2}$, where $x\sim \mathcal{N}(0, 1)$ has i.i.d. entries. Crucially, when $A$ is square with i.i.d. entries, it has full rank with probability one, and we obtain the asymptotic relationship $\norm{\bm{A}}_{\E}=\Theta(\norm{\bm{A}})$. A proof is provided in Lemma~\ref{lem:asymptotc_equivalence_of_norms}.

\textbf{Operator-Norm Focused Feature Learning:}
\cite{yang2023spectral} shows that constraining the spectral norm of the weight matrices implies feature learning in the sense of \cite{yangtensoriv}, where feature learning is defined to occur when
\begin{align}\label{eq:yang_feature}
    \norm{h^\ell_0}_2 = \Theta(\sqrt{n}), \qquad \norm{\Delta h^\ell_t}_2=\Theta(\sqrt{n}),
\end{align}
for all pre-activations $h^\ell$. In particular, \cite{yang2023spectral} prove that enforcing condition~\eqref{eq:spectral_condition} on spectral norms implies~\eqref{eq:yang_feature}. However, the converse does not hold: feature learning in the sense of~\eqref{eq:yang_feature} may still occur even if the weight matrices do not scale according to~\eqref{eq:spectral_condition}.

Consider a hidden layer $h(x) = \bm{W}x$ with trainable weights $\bm{W}\in \R^{n\times n}$ and an additional scaling parameter, the number of layers $L > 1$ independent of $n$, for which we want to ensure feature learning as $L\rightarrow \infty$. Under proper initialization, i.e. $\norm{\bm{W}_0}=\Theta(1)$, we have $\norm{h(x)}_2 = \Theta(\sqrt{n})$ for $x\in \R^n$ with $\norm{x}_2 = \Theta(\sqrt{n})$, as required by feature learning. Now suppose the learning rate is set incorrectly, and $\norm{\Delta \bm{W}_t} = \Theta(L^{-\alpha})$ for some $0 < \alpha$. Then the weight update takes the form
\begin{align*}
    \Delta h_t = \bm{W}_tx_t - \bm{W}_{t-1}x_{t-1} = \Delta \bm{W}_tx_t + \bm{W}_{t-1}\Delta x_{t}.
\end{align*}
Assuming that these terms do not exactly cancel, and noting that $\norm{\Delta x_t}_2=\Theta(\sqrt{n})$, we have that
\begin{align*}
    \norm{\Delta h_t} = \Theta(\sqrt{n}(1 + L^{-\alpha})) = \Theta(\sqrt{n}), 
\end{align*}
and thus this layer satisfies feature learning in the sense of~\eqref{eq:yang_feature}. For GQA, this precise situation arises, and the subtle failure of the terms $\Delta h_t$ to properly scale leads to a failure of learning rate transfer (see Figures~\ref{fig:dw_adam} and~\ref{fig:gqa_ablation_no_fsdp} below).

This analysis shows that the spectral condition of~\eqref{eq:spectral_condition} is a stronger notion of feature learning than~\eqref{eq:yang_feature} and we propose using it as the \textbf{definition} of feature learning. This perspective has beneficial practical consequences. When doing coordinate checking to validate a \mup implementation as shown in \citep{yangtensorv}, we found that directly analyzing the weight matrices proves more effective than analyzing only the activations (see Figure~\ref{fig:df_input_norm}). This point is discussed further below.

\textbf{A Functional Analytic View of Layer-Wise Computation:} Modern machine learning architectures consist of more than dense feed-forward units. Thus, we propose focusing on the computational units of the network rather than specifically focusing on matrices. In the case of dense feed-forward layers, these notions coincide. But for residual layers our perspective offers a more unifying approach. Concretely, we regard a neural network not only as a compositional sequence of matrix multiplications, but as a compositional sequence of abstract, generally non-linear mappings $\varphi^\ell:\R^m\rightarrow \R^n$. 

We suggest that the first part of the spectral condition in~\eqref{eq:spectral_condition} should be applied to each compositional unit, rather than the matrices themselves. Starting with the end-to-end computation of the network, we recursively apply this condition to all mappings $\varphi^\ell$. In conjunction with requiring that all trainable parameters satisfy both parts of~\eqref{eq:spectral_condition}, this leads to a unified treatment of residual layers which we discuss  below.

\begin{table*}[t]
\centering
\footnotesize
\caption{The table summarizes the parameterization of Transformers with Grouped-Query Attention (GQA), where $n$ denotes the input dimension and $r$ is the number of key-value head repetitions. Modifications specific to GQA are highlighted in blue. The derivations of learning rate and weight decay follow the AdamW implementation in PyTorch.}
\label{tab:mup-gqa}
\renewcommand{\arraystretch}{1.3}
\begin{tabular}{cccccc}
\toprule
 & Embed. & Unemb. & Attn. (Q, O) & Attn. (K, V) & Feed-forward \\
\toprule
Init. Var.
& \textcolor{gray}{1}
& \textcolor{gray}{1}
& \textcolor{gray}{$1/\sqrt{\text{n}}$}
& \textcolor{gray}{$1/\sqrt{n}$}
& \textcolor{gray}{$1/\sqrt{\text{n}}$} \\
Multiplier
& \textcolor{gray}{1}
& \textcolor{gray}{$1/\text{n}$}
& \textcolor{gray}{1}
& \textcolor{gray}{1}
& \textcolor{gray}{1} \\
LR
& \textcolor{gray}{1}
& \textcolor{gray}{1}
& \textcolor{gray}{$1/n$}
& \textcolor{blue}{${(1 + \sqrt{\text{r}})}/{(2n)}$}
& \textcolor{gray}{$1/n$} \\
Weight Decay
& \textcolor{gray}{1}
& \textcolor{gray}{1}
& \textcolor{gray}{$n$}
& \textcolor{blue}{${2n}/{(1 + \sqrt{\text{r}})}$}
& \textcolor{gray}{n} \\
\bottomrule
\end{tabular}
\vspace{0.1cm}
\end{table*}

\subsection{Weight Decay}

Weight decay is commonly applied in deep learning to stabilize model training dynamics~\citep{loshchilov2017decoupled, andriushchenko2023we}. 
For concreteness, we focus on AdamW \citep{loshchilov2017decoupled} in this section, although our framework extends well to other optimizers, such as MuON~\citep {jordan2024muon}, with weight decay. AdamW modifies the Adam weight update~\eqref{eq:adam_update} by including a weight decay term with the associated weight decay hyperparameter\footnote{
    We focus on \textbf{coupled} weight decay, which is the type of weight decay included in \texttt{PyTorch} \citep{adamw-decoupling-blog}. However, the weight decay introduced in \cite{loshchilov2017decoupled} is \textbf{decoupled} and given by $\Delta\bm{W}_t = -\lambda\,\bm{W}_t - \eta\hat{\bm{r}}_t$. Our results still apply in this case and prescribe the scaling $\lambda=\lambda_0$, where $\lambda_0$ is the base model weight decay, to ensure the terms all have the same size in norm. In other words, when using decoupled weight decay, the base weight decay term should not scale with model size. See also \cite{dey2025don}.}

We begin by defining the update rule for AdamW. The total parameter update $\Delta\bm{W}_t$ is composed of two distinct terms: a weight decay term and the standard Adam gradient-based update $\hat{\bm{r}}_t$:
\begin{align}\label{eq:adamw_update}
    \Delta\bm{W}_t = -\lambda\eta\,\bm{W}_t - \eta\hat{\bm{r}}_t.
\end{align}
For the learning dynamics to remain consistent (transferable) as we scale the network width $n$, the spectral norms of these two contributions must scale identically. Specifically, the weight decay magnitude must match the gradient update magnitude, and both must remain stable relative to the weights themselves. Mathematically, this balance condition is expressed as:
\begin{align*}
    \norm{\Delta \bm{W}_t} = \Theta(\lambda\eta\norm{\bm{W}_t})=\Theta(\eta\norm{\bm{r}_t}) = \Theta(1).
\end{align*}
\looseness=-1
Given that $\norm{\Delta \bm{W}_t} \approx \norm{\bm{W}_t}$ in the feature learning limit, it follows that the weight decay coefficient must satisfy $\lambda\eta=\Theta(1)$. Recall that \mup prescribes specific learning rates $\eta$ depending on the layer type: $\eta = \Theta(1)$ for input layers, and $\eta=\Theta(1/n)$ for hidden and output layers. To maintain the balance described above, the base weight decay $\lambda_0$ must be scaled as follows. For input Layers, since $\eta = \Theta(1)$, we require $\lambda^0=\Theta(1)$. For the hidden and output Layers, since $\eta = \Theta(1/n)$, we require $\lambda^\ell = \Theta(n)$.

Let us analyze the dynamics of a hidden layer where we introduce a scaling error $\delta > 0$. Suppose we set $\lambda^\ell = \lambda_0 n^{1+\delta}$. To maintain a stable update size, the learning rate needs to compensate, scaling as $\eta = \Theta(n^{-1-\delta})$. As $n \rightarrow \infty$, 
$\eta\hat{\bm{r}}_t$ becomes negligible. The weight decay term dominates
\(
\Delta\bm{W}_t \approx -\eta_0\lambda_0\,\bm{W}_t.
\) Consequently, the model ignores the data entirely, and the weights simply decay toward $\bm{0}$ without learning features. Conversely, suppose we set $\lambda = \lambda_0 n^{1-\delta}$. This implies standard \mup learning rate $\eta=\Theta(n^{-1})$ is relatively larger compared to the decay strength. As $n \rightarrow \infty$, 
\(
\Delta \bm{W}_t \approx - \eta\hat{\bm{r}}_t
\)
vanishes. The weight decay is effectively ignored. The algorithm collapses back to standard Adam, losing the regularization benefits of AdamW. Thus, the scaling $\lambda = \Theta(n)$ maintains the necessary equilibrium between regularization and feature learning at large widths.

Recent studies by \citet{wang2024set} and \citet{dey2025don} have derived a similar relation between learning rate and weight decay through different methods. We experimentally validate this relationship in Figure~\ref{fig:voronoi_tau}.


\subsection{Grouped Query Attention}\label{sec:gqa}
Grouped query attention (GQA) reduces computational cost by repeating the key and value heads in the Transformer \citep{ainslie2023gqa}. In a standard multi-headed attention layer, the key and value projections are given by weights $\bm{W}_K \in \R^{n\times n}$ and $\bm{W}_V \in \bm{R}^{n\times n}$, where $n$ is the embedding dimension. These matrices are partitioned into $H$ heads of size $n / H$ each and the $i$-th head is computed as $k_i = (\bm{W}_Kx)_{i}, \, v_i = (\bm{W}_Vx)_i$.
In GQA, the number of parameters is reduced by using only $p$ distinct key/value heads, where $H / p=r$ denotes the number of repetitions of each key/value head group.
We then define matrices $\bm{W}_{p, K}, \bm{W}_{p, V} \in \R^{\frac{n}{r}\times n}$, and construct the full key and value weights by concatenating along the output dimension
\begin{align}
    &\bm{W}_K^\oplus=\bigoplus_{m=1}^r\bm{W}_{p, K}, \quad \bm{W}_V^\oplus=\bigoplus_{m=1}^r\bm{W}_{p, V},
\end{align}
where $\oplus$ denotes concatenation along the first dimension\footnote{Note that concatenation and matrix multiplication commute: if $\bm{A} \in \R^{m\times n}$ and $x\in \R^n$, we have $\bm{A}^\oplus x = (\bm{A}x)^\oplus$, which follows directly by writing the product in it's index form.}.

Consider the initial weight matrix $\bm{W}_0$ for either the key or value projections, and its concatenation version $\bm{W}^\oplus_0$, and let $\bm{W}_t$ and $\bm{W}_t^\oplus$ denote their corresponding weight updates. To begin, applying the law of large numbers and the central limit theorem to~\eqref{eq:expected_operator_norm}, we obtain

\[
\resizebox{\linewidth}{!}{$%
\begin{aligned}
    \norm{\bm{W}^{\oplus}_0}_{\E} 
    = \E_x\left[\,\Theta\left(\frac{\left(\sum\limits_{k=1}^n\left(\sum\limits_{j=1}^n(W_0^\oplus)_{kj} x_j\right)^2\right)^{\frac{1}{2}}}{(\sum\limits_{k=1}^nx_j^2)^{\frac{1}{2}}}\right)\,\right] = \E_x\left[\,\Theta\left(\frac{\left(r\sum\limits_{k=1}^{\frac{n}{r}}\left(\sum\limits_{j=1}^n(W_0)_{kj} x_j\right)^2\right)^{\frac{1}{2}}}{(\sum\limits_{k=1}^nx_j^2)^{\frac{1}{2}}}\right)\,\right].
\end{aligned}
$}
\]
Therefore, 
\[  \norm{\bm{W}^{\oplus}_0}_{\E}  = \Theta\left(\frac{(r\times \frac{n}{r}\times n\times \sigma^2)^{\frac{1}{2}}}{n^{\frac{1}{2}}}\right) = \Theta\left(\sigma n^{\frac{1}{2}}\right).\]

\looseness=-1
To satisfy the spectral condition in~\eqref{eq:spectral_condition}, we require $\sigma = \Theta(n^{-1/2})$. Importantly, this corresponds to the expected operator norm for $\bm{W}^\oplus$, not the spectral norm of the constituent matrix $\bm{W}$. Because $\bm{W}_0$ has full rank with probability 1, its spectral norm can be computed directly using Bai-Yin \citep{bai1993limit, yin1988limit}
\begin{align}\label{eq:gqa_deriv}
    \norm{\bm{W}_0} = \Theta\left(\sigma\left(\sqrt{n} + \frac{\sqrt{n}}{\sqrt{r}}\right)\right) = \Theta\left(\frac{1 + \sqrt{r}}{\sqrt{r}}\right).
\end{align}
Moreover, in terms of spectral norms, $\norm{\bm{W}_0^\oplus}=\sqrt{r}\norm{\bm{W}_0}$ (Lemma~\ref{lem:concat}), so that the spectral norm and the expected operator norm do not agree in this setting (see Figure~\ref{fig:expected_norms}).



The computation in~\eqref{eq:gqa_deriv} is critical for determining the required learning rate, since we require $\norm{\Delta \bm{W}_t}=\Theta(\norm{\bm{W}_0})$. To this end, we compute $\eta$ in the usual manner. Assuming the use of the Adam optimizer with update step $\hat{\bm{r}}_t$, we have

\begin{align*}
    \norm{\Delta\bm{W}_t} = \eta\norm{\hat{\bm{r}}_t} = \Theta\left(\frac{\eta n}{\sqrt{r}}\right) = \Theta\left(\frac{1 + \sqrt{r}}{\sqrt{r}}\right).
\end{align*}
From this we easily deduce that $\eta = \Theta\left(\frac{1+\sqrt{r}}{n}\right)$. We normalize by a factor of two to ensure that when $r=1$ our scalings agree with the usual full-rank hidden layer scalings:
\begin{align}
    \sigma = \frac{1}{\sqrt{n}}\sigma_0, \qquad \eta =\frac{1+\sqrt{r}}{2n}\eta_0.
    \label{eq:learning_rate_scale}
\end{align}

Through the above derivations, we arrive at the parameterization of Transformers with GQA as summarized in Table~\ref{tab:mup-gqa}.

Our GQA-$\mu$P can adapt to any group size by following equation~\ref{eq:learning_rate_scale}. Empirically, we evaluate $r \in \{1,2,3,4,5,6,12\}$. These values cover commonly used settings, including $r=4$ for Llama-3 8B and Mistral 7B, $r=8$ for Llama-2/3 70B and Qwen-2.5 72B, and $r=12$ for Cohere Command R+ (e.g., 96 query heads and 8 key/value heads).

To clarify, our mathematical framework is asymptotically valid with respect to $r$. However, in practice, $r$ is not scaled to infinity in the same way as the network width $n$ is. Since $n \geq r$, the extreme limit where $r = n$ would force the model to have single-parameter attention heads. Thus, while our asymptotic analysis is sound, this particular infinite limit is not a realistic scenario that would actually be used. The typical range of $r$ in practice usually does not exceed 16, which may not be asymptotically
large. However, we would like to emphasize that our primary objective in scaling with $r$ is empirical: to prevent learning-rate drift when changing the number of repetitions $r$. Our derivations remain valid in the asymptotic limit, which is merely a mathematical consequence of the theory rather than the practical setting we target.

\subsection{Complete-P Depth Scaling}\label{sec:depth_scaling}

We now turn to the depth scaling of residual networks. Complete-P~\citep{dey2025don}, derives depth scalings by relying on an additional desideratum of ``no lazy learning.'' We show that we do not need this additional assumption. We demonstrate that applying the standard $\mu$P desideratum to the spectral norm automatically prevents the no lazy learning.'' assumption. In this section, we show that applying our framework naturally recovers the exact same scaling as in Complete-P.

Consider the stacked hidden layers, where the output of the $\ell$-th layer is given by the residual update:
\begin{equation}
    G^\ell(x) = x + \beta g^\ell(x) \quad \text{for } 1 \leq \ell \leq L.
\end{equation}
Here, $x \in \R^n$ is the input, $g^\ell: \R^n \rightarrow \R^n$ represents the residual branch, and $\beta$ is a scaling constant independent of the layer index $\ell$. To formalize our analysis and address the network dynamics rigorously in the large-width ($n \to \infty$) and large-depth ($L \to \infty$) limits, we state the following assumptions explicitly:
\begin{itemize}[itemsep=2pt, parsep=0pt, topsep=0pt, partopsep=0pt, leftmargin=*]
\item \textbf{Assumption 1 (Input Scaling):} The network inputs satisfy $\norm{x}_2 = \Theta(\sqrt{n})$. This ensures that the base compositional units satisfy the spectral condition in \eqref{eq:spectral_condition}.    \item \textbf{Assumption 2 (Stable Operator Norms):} $\norm{g^\ell} = \Theta(1)$, $\norm{\Delta g^\ell} = \Theta(1)$, $g^{\ell}$ is full-rank, and $\beta\norm{g^{\ell}} < 1$.
    \item \textbf{Assumption 3 (No Exact Cancellation):} This is a standard assumption in the \mup{} and Tensor Programs literature~\citep{yangtensoriv}. For more intuition about this, please refer to ~\ref{sec:intuition_cancellation}.
\end{itemize}

Let $\overline{G}_t^\ell = \bigcomp_{k=1}^\ell G_t^k$ represent the composition of the first $\ell$ layers at training step $t$. We can express the network recursively:
\begin{equation}
    \overline{G}_t^\ell = \overline{G}_t^{\ell - 1} + \beta g_t^\ell \circ \overline{G}_t^{\ell-1}.
\end{equation}

Under Assumption 3, taking the norm yields the following asymptotic relation:
\begin{equation}
    \norm{\overline{G}_t^\ell} = \Theta\left(\norm{\overline{G}_t^{\ell-1}} + \beta \norm{g_t^\ell} \norm{\overline{G}_t^{\ell-1}}\right).
\end{equation}

Because Assumption 2 dictates $\norm{g_t^\ell} = \Theta(1)$, every layer effectively adds a proportional factor of $\beta$ to the norm estimate. We can formalize the resulting depth-dependent bound via a recursive induction argument. For the base case, at the input, $\norm{\overline{G}_t^0} = \norm{I} = 1 \leq \Theta(1)$. Assume the bound holds for layer $\ell$, such that $\norm{\overline{G}_t^\ell} \leq \Theta(1 + \ell\beta)$. For layer $\ell + 1$, we substitute the hypothesis into our asymptotic relation:
    \begin{align*}
        \norm{\overline{G}_t^{\ell+1}} \leq \Theta\left(\norm{\overline{G}_t^\ell} + \beta \norm{\overline{G}_t^\ell}\right) \leq \Theta\left((1 + \ell\beta) + \beta(1 + \ell\beta)\right) = \Theta\left(1 + \ell\beta + \beta + \ell\beta^2\right).
    \end{align*}
    Because we are analyzing regimes where $\beta < 1$, the higher-order term $\ell\beta^2$ is dominated by the linear terms ($\beta^2 < \beta$). Thus, the expression simplifies to:
    \begin{equation*}
        \norm{\overline{G}_t^{\ell+1}} \leq \Theta(1 + (\ell + 1)\beta).
    \end{equation*}
By induction, evaluating this at the final layer yields the total forward pass bound:
\begin{equation}
    \norm{\overline{G}_t^L} = \Theta(1 + L\beta).
\end{equation}

To maintain stable representations and avoid network explosion or vanishing signals, we require the final norm to be independent of depth, i.e., $\Theta(1 + (\ell + 1)\beta) = \Theta(1)$. Setting this equality forces $L\beta = \Theta(1)$, which directly implies that $\beta = \Theta(L^{-1})$.

\looseness=-1
We single out a minor point of confusion: for a fixed, finite depth network (e.g., $L=2$), a constant value such as $\beta=1/2$ naturally satisfies this requirement because $1/2 = O(1/L)$ for that specific $L$. However, when considering the asymptotic limit where $L \to \infty$, the parameter $\beta$ cannot be a static constant larger than $1/L$; it must shrink in exact proportion to the total depth to prevent the $L\beta$ term from diverging. Conversely, choosing a faster shrinking exponent, such as $\beta = \Theta(L^{-\alpha})$ for $\alpha > 1$ \citep{yangtensorvi}, causes the residual branches to vanish entirely, degenerating into trivial dynamics. Therefore, $\beta = \Theta(L^{-1})$ guarantees stability without sacrificing expressivity, arriving at the exact same bound as \cite{dey2025don} through an alternate derivation.

\section{Empirical Results}\label{sec:empirical}


In this section, we present our empirical results. Details of model configurations and experimental setups are provided in Appendix~\ref{app:model_configs}.




\begin{figure*}[!t]
    \centering
    \includegraphics[width=0.84\linewidth]{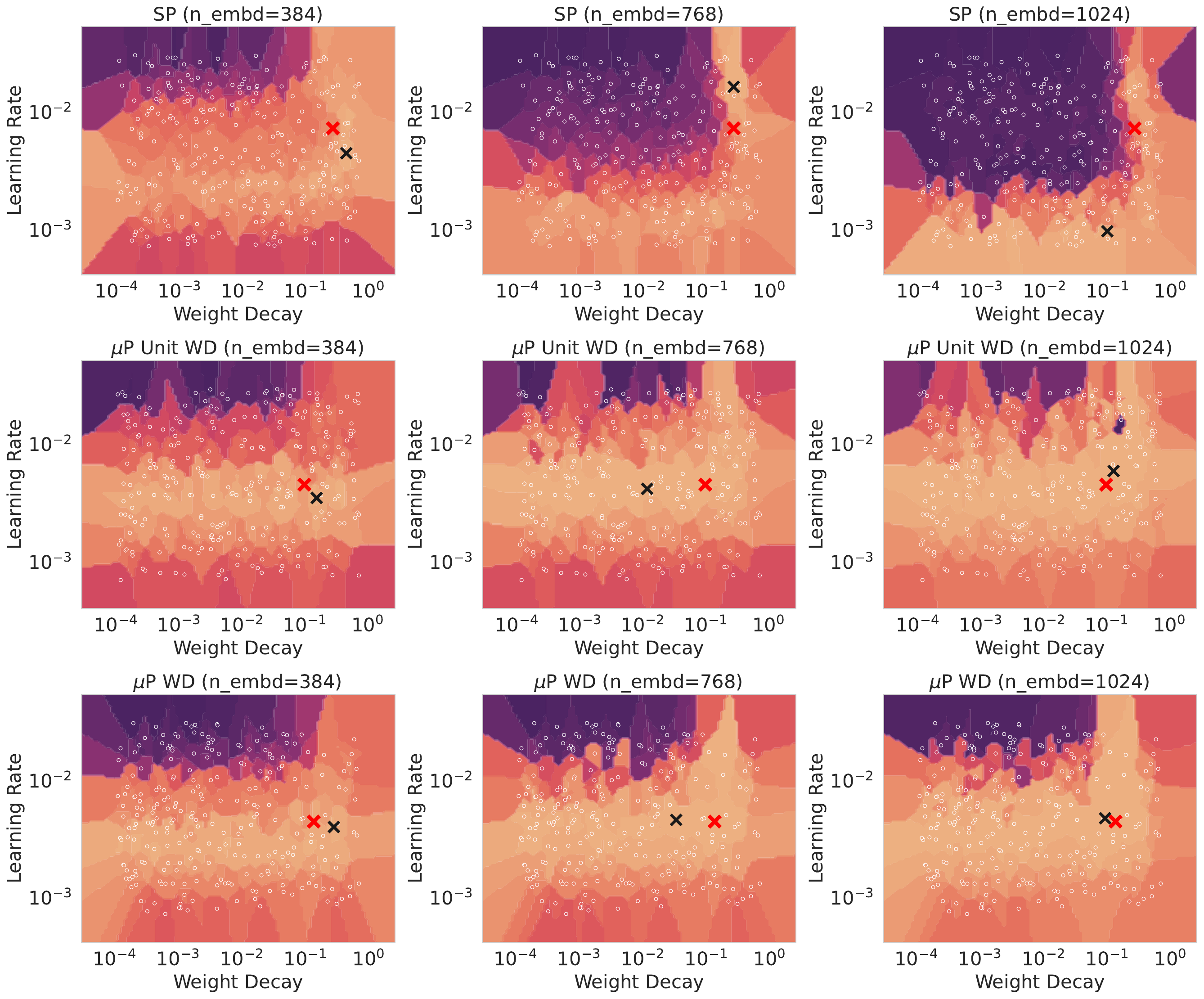}
    \caption{Voronoi interpolation for random sweeps over both learning rate and weight decay.
    The top row is standard parameterization. The middle row is the vanilla Adam-\mup implementation suggested in \cite{yangtensorv}.
    The bottom row is our proposed implementation. Each column corresponds to a different size model, with the number of parameters increasing from left to right.
    For each model and implementation, we plot the best trial.
    Hidden dimension, depth, batch size, and training iterations are all scaled.
    Lighter colors indicate lower loss, darker colors indicate higher loss.
    The red crosses mark the average \texttt{(learning rate, weight decay)} pair, where each coordinate is averaged over the model sizes, while the black crosses are the optimal pair for each experiment.}
    \label{fig:voronoi_wd}
\end{figure*}

\begin{figure}[!t]
    \centering
    \includegraphics[width=\linewidth]{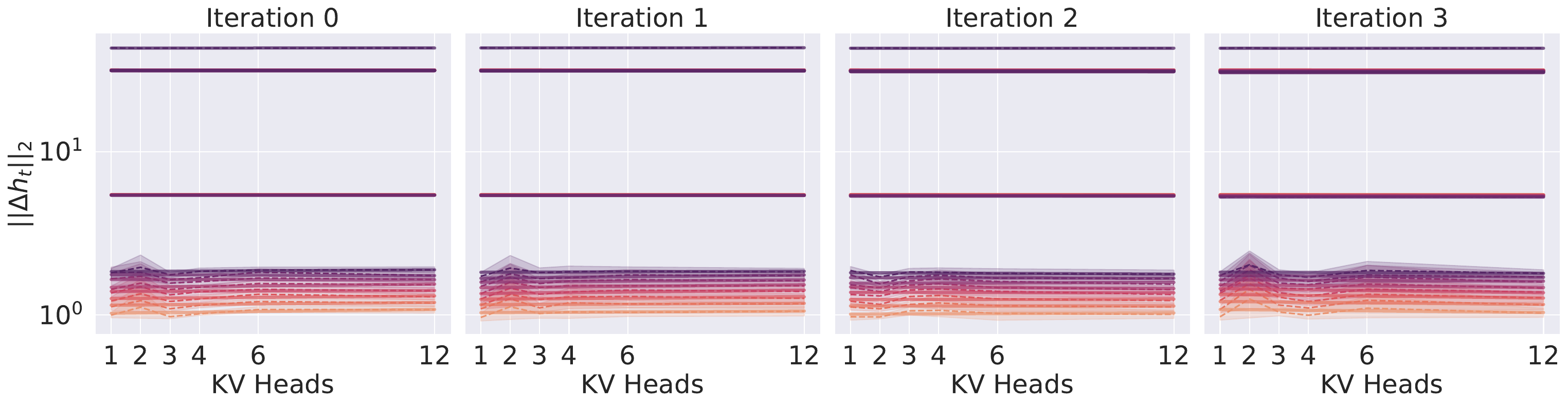}
    \caption{Coordinate checks in the style of \citet{yangtensorv} for  the activation update norms $\norm{\Delta h^\ell_t}_2$ under the vanilla Adam-\mup implementation.
    Together, these coordinate checks indicate that the implementation is correct, and that feature learning and thus learning rate transfer should occur.
    However, Figure~\ref{fig:gqa_ablation_no_fsdp} shows that learning rate transfer for this implementation does not occur. 
    }
    \label{fig:df_input_norm}
\end{figure}

\begin{figure*}[!t]
    \centering
    \includegraphics[width=\linewidth]{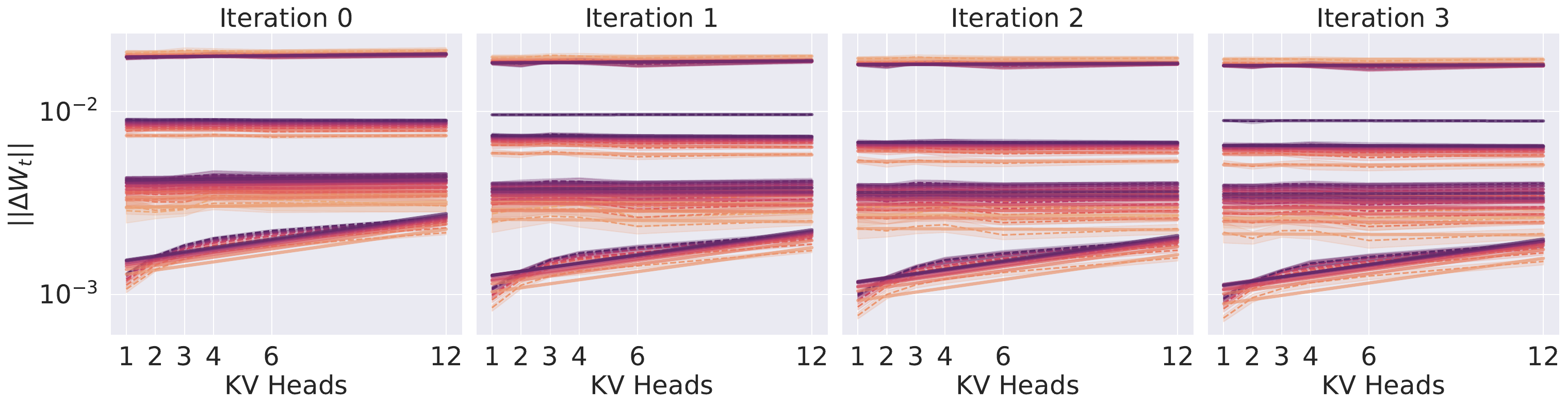}
    \caption{Coordinate checks for $\norm{\Delta \bm{W}}$ under the vanilla Adam-\mup scalings.
    The model fails the coordinate checks when evaluated using the spectral feature learning condition~\eqref{eq:spectral_condition}.
    However, as shown in Figure~\ref{fig:df_input_norm}, it does pass when evaluated under Yang’s definition of feature learning~\ref{eq:yang_feature}.
    }
    \label{fig:dw_adam}
\end{figure*}

\begin{figure*}[!t]
    \centering
    \includegraphics[width=\linewidth]{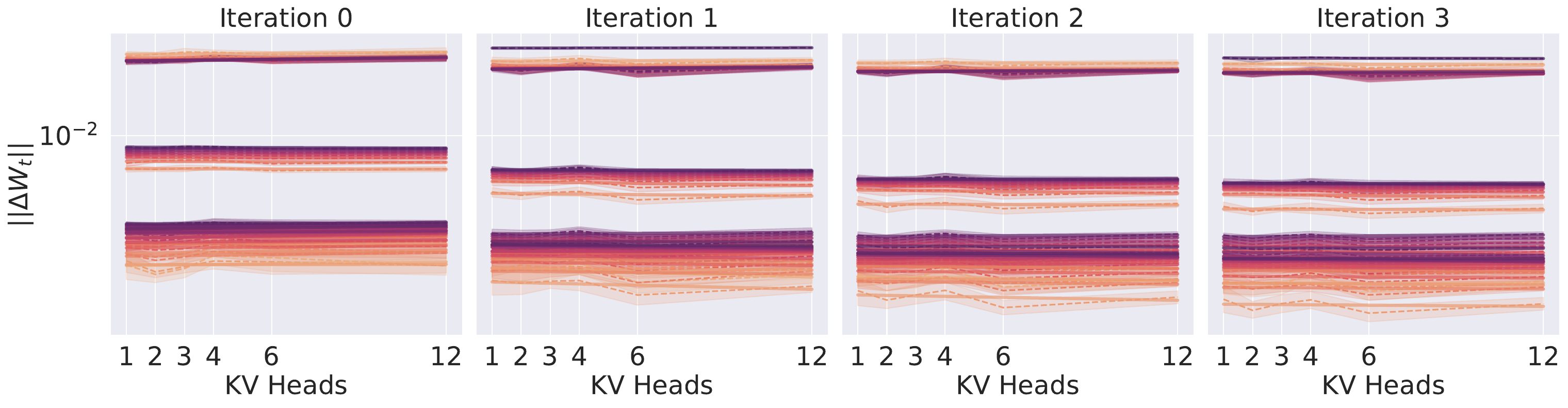}
    \caption{Coordinate checks for $\norm{\Delta \bm{W}}$ under our proposed GQA scalings.
    The model has eight hidden layers. Additional experimental details are provided in Appendix~\ref{app:cc_exp}.}
    \label{fig:dw_gqa}
\end{figure*}

\paragraph{Coordinate-Checks Demonstrate the Necessity for Spectral Feature Learning:}
As discussed in Section~\ref{sec:derivation}, validating feature learning by measuring the norms of $h^\ell$ and $\Delta h_t^\ell$ can be misleading.
Figures~\ref{fig:df_input_norm} plot $\norm{\Delta h^\ell_t}$ for the vanilla Adam-\mup implementation.
The coordinate check would suggest transferable learning rates, yet empirical results show otherwise (see Figure~\ref{fig:gqa_ablation_no_fsdp}, middle).
By contrast, when we instead examine the spectral norm conditions in~\eqref{eq:spectral_condition} (Figure~\ref{fig:dw_adam}), the model fails the coordinate check: a clear non-linear dependence on the number of KV heads of the model, which explains the lack of transferable dynamics.

\paragraph{Coordinate-Checks Demonstrate a Qualitative Dependency on $r$:}
Because the vanilla Adam-\mup implementation and our implementation share the same initialization scaling, we do not compare $\norm{\bm{W}}$ directly.
Instead, Figure~\ref{fig:dw_adam} presents the coordinate checks for the vanilla Adam-\mup implementation, while Figure~\ref{fig:dw_gqa} shows the corresponding coordinate checks for our proposed GQA scaling.
Our method passes the coordinate check, thereby enabling $\mu$-transfer of learning rate.
By contrast, the vanilla Adam-\mup implementation shows a persistent dependency on the number of KV heads, explaining why the learning rate does not transfer in this case.

\paragraph{Learning Rate Transfer for Grouped Query Attention:}
We perform an ablation study comparing the standard parameterization, the vanilla Adam-\mup implementation (where the KV heads are initialized as hidden layers), and our proposed GQA-\mup.
The results of this ablation study are summarized in Figure~\ref{fig:gqa_ablation_no_fsdp}.
We observe that the vanilla Adam-\mup scaling does not account for the shift induced by using GQA, whereas our proposed scaling brings the optimal learning rates into a much narrower region.
Noise inherent to GQA training is already evident in these plots and becomes more pronounced as the number of KV heads decreases.
This noise is apparent in both the coordinate checks from Figure~\ref{fig:dw_gqa} as well as in Figure~\ref{fig:expected_norms}.
We provide an explanation for this phenomenon in the following paragraph.

\paragraph{Expected Variance in GQA Transfer:}
From the perspective of \mup, the nature of GQA introduces a dichotomy: one may achieve feature learning in the sense of~\eqref{eq:spectral_condition} and thereby obtain learning rate transfer, but at the cost of increasingly noisy dynamics as the number of KV heads decreases;
alternatively, one may constrain the variance as we decrease the number of KV heads to stabilize the training, but this leads to a shift in optimal learning rate.
Consequently, we suggest that in scenarios where transferable dynamics are critical, it may be preferable to avoid using GQA altogether.

\paragraph{\mup (Mostly) Decouples Coupled Weight Decay}
To examine the transferability of optimal learning rate and optimal weight decay across model scales, we do a random grid search over (learning rate, weight decay) pairs at constant initial standard deviation. We plot our results in Figure~\ref{fig:voronoi_wd}. 
We note that under the standard parameterization, neither the learning rate nor weight decay transfers, and that the qualitative properties of the Voronoi-interpolated loss landscape change markedly as the model size increases from 26M to 177M non-embedding parameters.
By contrast, both the vanilla Adam-\mup implementation and our proposed scaling preserve their qualitative properties across model sizes.

For the experiment in Figure~\ref{fig:voronoi_wd}, we quantify the degree of transfer in Table~\ref{tab:voronoi_batch}.
We find that the variance of both the optimal learning rate and the optimal weight decay across model sizes is lower for our implementation than for the vanilla Adam-\mup baseline.
Thus, it suggests that our proposed implementation enables the transfer of both learning rate and weight decay across model scales, both qualitatively and quantitatively.

\begin{wraptable}[10]{r}{0.45\linewidth}
\centering
\caption{Variance table comparing our implementations across model sizes for the $\tau_{\text{epoch}}$ experiment from Figure~\ref{fig:voronoi_tau}.}
\label{tab:voronoi_tau}
\resizebox{\linewidth}{!}{%
\begin{tabular}{llll}
\toprule
Implementation           & Var. LR              & Var. $\tau_{\text{epoch}}$ & Var. Loss            \\ 
\midrule
SP                       & $1.34$               & $2.78$                     & $4.87\times 10^{-1}$ \\ 
$\mu$P                   & $4.75\times 10^{-2}$ & $1.49$                     & $4.87\times 10^{-1}$ \\ 
$\mu$P + WD              & $5.54\times 10^{-3}$ & $6.56\times 10^{-1}$       & $4.77\times 10^{-1}$ \\ 
\bottomrule
\end{tabular}}%
\end{wraptable}

Previous works have argued that the quantity $\tau_{\text{epoch}}=(\lambda_0 \times \eta_0 \times \text{iters})^{-1}$ should transfer instead of weight decay \citep{wang2024set, bergsma2025power}.
We found that both weight decay and $\tau_{\text{epoch}}$ transfer in our experimental setting.
This is a non-trivial observation since we vary the number of iterations based on the model size.
Figure~\ref{fig:voronoi_tau} presents the analog of our interpolation diagram.
Figure~\ref{fig:voronoi_wd} and Table~\ref{tab:voronoi_tau} reports the quantitative variance results for $\tau_{\text{epoch}}$.
We find that $\tau_{\text{epoch}}$ transfers slightly better than weight decay in our setting.
\section{Conclusions}
In this paper, we introduced a novel extension of the spectral \mup framework originally developed by \cite{yang2023spectral}. We can apply our framework to rederive the Complete-P weight decay and depth scalings from \cite{dey2025don}. Additionally, we use our framework to derive, for the first time, the \mup scalings for grouped query attention \citep{ainslie2023gqa}. We perform empirical validation in two directions for our work. First, we explore the empirical nature of learning rate transfer for GQA. We find that we can either do noisy learning rate transfer or fail to transfer the learning rate. This dichotomy is a consequence of the competing scalings between the spectral norm and the expected operator norm. Compared to the standard \mup implementation, our method reduces the variance in optimal learning rate during learning rate transfer. Second, we explore the transferability of weight decay across model sizes. We demonstrate that with the standard \mup implementation, we can nearly achieve transfer of weight decay. With our implementation, we are able to get much closer to true transfer across both learning rate and weight decay. Future work could extend scaling laws for Mixture-of-Experts or other sparse models.


\bibliography{neurips_2026}
\bibliographystyle{iclr2026_conference}

\newpage
\appendix

\section{Additional Mathematical Details}\label{eq:detailed_math}
\subsection{Derivation for Adam}\label{app:adam}
We demonstrate the applicability of our framework by re-deriving the \mup scalings for Adam. Recall that the Adam optimizer~\cite{kingma2014adam} uses hyperparameters $\beta_1$, $\beta_2$, $\ep$, and $\eta$ and has its optimization steps given by the following components:
\begin{align*}
    g_t &= \nabla_{\bm{W}}f(\bm{W}_{t-1}), \\
    m_t &= \beta_1m_{t-1} + (1-\beta_1)g_t, \qquad
    v_t = \beta_2v_{t-1} + (1-\beta_2)g_t^2, \\
    \hat{m}_t &= \frac{m_t}{1 - \beta_1^t}, \qquad\qquad\qquad\qquad
    \hat{v}_t = \frac{v_t}{1 - \beta_2^t},
\end{align*}
with the weight update
\begin{align}\label{eq:adam_update}
    \bm{W}_t  &= \bm{W}_{t-1}-\eta\frac{\hat{m}_t}{\sqrt{\hat{v}_t}+\ep}.
\end{align}
The key observation is that the term
\begin{align}\label{eq:adam_r}
    \hat{\bm{r}}_t := \frac{\hat{m}_t}{\sqrt{\hat{v}_t}+\ep}
\end{align}
will always have typical size $1$ (for $\ep$ sufficiently small), and as such the spectral norm can be estimated using the Bai-Yin theorem \citep{yin1988limit,bai1993limit}, depending on whether the layer is vector-like or matrix-like. Thus, we have the following reasoning. For an input layer, we have
\begin{align*}
    \norm{\Delta \bm{W}^0_{t}} = \underbrace{\Theta(\eta^0 \sqrt{n})}_{\text{Bai-Yin}} = \underbrace{\Theta(\sqrt{n})}_{\text{\eqref{eq:spectral_condition}}},
\end{align*}
which implies that we must choose $\eta^0=\Theta(1)$. Next, for the hidden layers, we have
\begin{align*}
    \norm{\Delta \bm{W}^\ell_{t}} = \underbrace{\Theta(\eta^\ell n)}_{\text{Bai-Yin}} = \underbrace{\Theta(1)}_{\text{\eqref{eq:spectral_condition}}},
\end{align*}
which leads us to choose $\eta^\ell = \Theta(n^{-1})$. Finally, for the output layer, we have
\begin{align*}
    \norm{\Delta \bm{W}^{L+1}_{t}} = \underbrace{\Theta(\eta^{L+1} \sqrt{n})}_{\text{Bai-Yin}} = \underbrace{\Theta(n^{-1/2})}_{\text{\eqref{eq:spectral_condition}}},
\end{align*}
which leads us to choose $\eta^{L+1}=\Theta(n^{-1})$. There is a subtle nuance in our derivation that is also often overlooked in the literature. We have assumed that the Adam optimizer step is independent of the network width $n$, but this is not quite true. To see why, consider setting $\beta_1=\beta_2=1$, so that the Adam optimizer step is given simply by
\begin{align*}
    \hat{\bm{r}}_t = \frac{g_t}{|g_t|+\ep},
\end{align*}
where $g_t$ is the gradient. For concreteness, consider a hidden layer. \cite{yangtensorv} show that the gradient will scale like $\Theta(1/n)$. Thus letting $\overline{g}_t=g_t/n$ be the size $1$ normalized gradient updates, we have that the
\begin{align*}
    \hat{\bm{r}}_t = \frac{\overline{g}_t}{|\overline{g}_t|+n\ep},
\end{align*}
which is not actually $\Theta(1)$ in $n$, since for $n = \Omega(\ep^{-1})$, the Adam updates decay like $n^{-1}$. Thus, to be pedantic and ensure actual feature learning, we must scale $\ep = \ep_0 / n$ to continue to achieve feature learning. In practice, we find that this subtlety can be avoided by setting $\ep = 10^{-12}$ instead of the usual default of $10^{-8}$; however, for a complete treatment, this scaling must be included.  We note that \cite{dey2023cerebras}, \cite{everett2024scaling} make the same conclusion about scaling the Adam $\ep$ parameter and perform an empirical study on its transferability.

\subsection{Additional Details for Grouped Query Attention}
\begin{lemma}{Scaling Concatenated Spectral Norms.}\label{lem:concat}
    Let $A \in \bm{R}^{n \times \frac{n}{r}}$ for some integer $r \geq 1$, where $r$ is the number of repetitions of key and value heads, be a matrix with spectral norm $\norm{A} > 0$. Then letting $A^{\oplus} := \bigoplus_{j=1}^r A$ we have
    \begin{align}
        \norm{A^{\oplus}} = \sqrt{r}\norm{A}.
    \end{align}
\end{lemma}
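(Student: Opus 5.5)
Throughout, $A^{\oplus}$ is the stacking of $r$ copies of $A$ along the first (row) dimension, as in the concatenation $\bm{W}^\oplus_K=\bigoplus_{m=1}^r\bm{W}_{p,K}$ of Section~\ref{sec:gqa}. The plan is to compute $\norm{A^\oplus x}_2$ exactly for every input vector $x$ and then take the supremum over unit vectors. By the footnote in Section~\ref{sec:gqa}, concatenation commutes with matrix–vector multiplication, so $A^\oplus x = (Ax)^\oplus$. Thus $A^\oplus x$ is the vector $Ax$ repeated $r$ times.

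The squared Euclidean norm of a concatenation is the sum of the squared norms of its blocks. Hence for every $x$ in the domain of $A$,
\begin{align*}
\norm{A^\oplus x}_2^2 = \sum_{j=1}^r \norm{Ax}_2^2 = r\,\norm{Ax}_2^2,
\end{align*}
that is, $\norm{A^\oplus x}_2 = \sqrt{r}\,\norm{Ax}_2$ pointwise in $x$. Taking the supremum over $\norm{x}_2=1$ on both sides gives $\norm{A^\oplus} = \sqrt{r}\,\sup_{\norm{x}_2=1}\norm{Ax}_2 = \sqrt{r}\norm{A}$. The positive factor $\sqrt{r}$ passes through the supremum because the equality holds for each $x$, not merely as an inequality.

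Equivalently, one can argue via singular values: $(A^\oplus)^\top A^\oplus = \sum_{j=1}^r A^\top A = r A^\top A$. The largest eigenvalue of $(A^\oplus)^\top A^\oplus$ is therefore $r$ times that of $A^\top A$, and taking square roots gives the claim. I would include this as a one-line cross-check.

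The only point needing care is bookkeeping of dimensions. The lemma writes $A\in\R^{n\times\frac{n}{r}}$, while Section~\ref{sec:gqa} uses $\bm{W}_{p}\in\R^{\frac{n}{r}\times n}$ stacked along the output dimension. I would state that concatenation is along the output (row) axis, so that $A^\oplus$ acts on the same input space as $A$. The argument above uses no property of the shape beyond this, so it covers either convention. The hypothesis $\norm{A}>0$ is not needed for the identity; it only ensures both sides are nonzero.
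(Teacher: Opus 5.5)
Your proof is correct, and your main argument takes a more elementary route than the paper's. The paper writes the SVD $A=U\Sigma V^\top$ and forms the side-by-side concatenation $[A\,A\,\cdots\,A]=U\,[\Sigma V^\top\,\cdots\,\Sigma V^\top]$. It then multiplies out block matrices to get the Gram matrix as $U\,\mathrm{diag}(r\Lambda^2,0,\dots,0)\,U^\top$, and reads off the singular values from that.

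You instead prove the pointwise identity $\norm{A^\oplus x}_2=\sqrt{r}\,\norm{Ax}_2$ and take the supremum, which needs no SVD at all. Your cross-check $(A^\oplus)^\top A^\oplus=rA^\top A$ is what the paper's block computation reduces to once the SVD bookkeeping is removed. (The paper gets the transposed form $rAA^\top$ because it concatenates columns.)

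Each route buys something different. The Gram-matrix route shows that the entire spectrum scales by $\sqrt{r}$ and the rank is unchanged. Your pointwise identity holds for every $x$, not just at the maximizer, so it also gives $\norm{A^\oplus}_{\E}=\sqrt{r}\,\norm{A}_{\E}$ immediately. That is exactly the factor of $r$ the paper extracts by hand when computing $\norm{\bm{W}^\oplus_0}_{\E}$ in Section~\ref{sec:gqa}.

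One caveat on your remark that the argument ``covers either convention'': it covers either shape, but it is tied to stacking along rows. The paper's own proof uses $[A\,A\,\cdots\,A]$, i.e.\ concatenation along columns. There, writing $x=(x_1,\dots,x_r)$ in blocks, one has $[A\,\cdots\,A]\,x=A\sum_j x_j$, so no pointwise identity is available. To cover that reading, apply your argument to $[A\,\cdots\,A]^\top$, which is the row-stack of $A^\top$, and use $\norm{M}=\norm{M^\top}$.
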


\begin{proof}[\textbf{Proof of Lemma~\ref{lem:concat}}.]
We denote the $r$-times concatenation by 
\begin{align}\label{eq:stacked_A}
    A^* = [\,\underbrace{A\,A\,\cdots\,A}_{\text{$r$ times}}\,].
\end{align}
Each matrix $A$ has a singular value decomposition $U\Sigma V^T$ for $U, V$ unitary with $U \in \R^{n \times n}$, $V \in \R^{\frac{n}{r}\times \frac{n}{r}}$, and $\Sigma = \begin{bmatrix}\Lambda \\ 0\end{bmatrix} \in \R^{n \times \frac{n}{r}}$ with $\Lambda$ a diagonal $\R^{\frac{n}{r}\times \frac{n}{r}}$ matrix. Substituting the SVD into~\eqref{eq:stacked_A} we can factor out the unitary matrix $U$ and arrive at
\begin{align*}
    A = U\,[\,\Sigma V^T\,\Sigma V^T\,\cdots\,\Sigma V^T\,]
    = U\,\begin{bmatrix}
        \Lambda V^T & \Lambda V^T & \cdots & \Lambda V^T \\
        0 & 0 & \cdots & 0 \\
        \vdots & \vdots & \ddots & \vdots \\
        0 & 0 & \cdots & 0
    \end{bmatrix}
\end{align*}
It remains to find the singular values of this matrix, which give us the spectral norm scaling. To this end, observe that by the unitary of $V$ we have
\[
\begin{aligned}
& U
\begin{bmatrix}
    \Lambda V^T & \Lambda V^T & \cdots & \Lambda V^T \\
    0           & 0           & \cdots & 0 \\
    \vdots      & \vdots      & \ddots & \vdots \\
    0           & 0           & \cdots & 0
\end{bmatrix}
\begin{bmatrix}
    V \Lambda & 0      & \cdots & 0 \\
    V \Lambda & 0      & \cdots & 0 \\
    \vdots    & \vdots & \ddots & \vdots \\
    V \Lambda & 0      & \cdots & 0
\end{bmatrix}
U^T
= U
\begin{bmatrix}
    r\Lambda^2 & 0      & \cdots & 0 \\
    0          & 0      & \cdots & 0 \\
    \vdots     & \vdots & \ddots & \vdots \\
    0          & 0      & \cdots & 0
\end{bmatrix}
U^T
\end{aligned}
\]

Thus, the largest eigenvalue of $AA^T$ is given by $r\lambda_{\text{max}}^2$, with $\lambda_{\text{max}}$ being the largest eigenvalue of $A$, and the desired spectral norm scaling is immediate.
\end{proof}

\begin{lemma}\label{lem:asymptotc_equivalence_of_norms}
    Let $A \in \bm{R}^{n \times n}$ have i.i.d. entries. Then the for $x\sim \mathcal{N}(0, 1)$ with i.i.d. entries, we have that
    \begin{align*}
        \norm{\bm{A}}_{\E} = \Theta(\norm{\bm{A}}).
    \end{align*}
\end{lemma}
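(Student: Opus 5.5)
We plan to compare both sides to the Frobenius norm $\norm{\bm{A}}_F$ and then use standard random matrix asymptotics. We interpret ``i.i.d.\ entries'' as centered entries with variance $\sigma^2$ and finite fourth moment, which is the regime of the Bai--Yin theorem already invoked in the paper. By Bai--Yin, $\norm{\bm{A}} = (2+o(1))\sigma\sqrt{n}$ almost surely. The strong law of large numbers gives $\norm{\bm{A}}_F^2 = (1+o(1))\sigma^2 n^2$ almost surely, so $\norm{\bm{A}}_F/\sqrt{n} = \Theta(\norm{\bm{A}})$. It therefore suffices to show that $\norm{\bm{A}}_{\E} = \Theta(\norm{\bm{A}}_F/\sqrt{n})$ for a fixed (deterministic, conditioned on $\bm{A}$) matrix that is ``typical'' in a sense made precise below.

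The upper bound is immediate, since $\norm{\bm{A}x}_2 \le \norm{\bm{A}}\,\norm{x}_2$ pointwise, so $\norm{\bm{A}}_{\E}\le\norm{\bm{A}}$ with no asymptotics needed. For the lower bound we use rotational invariance of the Gaussian. Write $x = \rho u$ with $u$ uniform on the sphere $S^{n-1}$ and independent of $\rho$. Then
\[
\norm{\bm{A}}_{\E} = \E_u\norm{\bm{A}u}_2 \quad\text{and}\quad \E_u\norm{\bm{A}u}_2^2 = \tfrac{1}{n}\operatorname{tr}(\bm{A}^T\bm{A}) = \norm{\bm{A}}_F^2/n .
\]
To pass from the second moment to the first, we apply Paley--Zygmund, or equivalently the bound $\E Z \ge (\E Z^2)^{3/2}/(\E Z^4)^{1/2}$ for $Z=\norm{\bm{A}u}_2$. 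This requires bounding $\E\norm{\bm{A}u}_2^4$ by $O\bigl((\norm{\bm{A}}_F^2/n)^2\bigr)$.

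For the fourth moment we again use the Gaussian representation. With $x\sim\mathcal N(0,I)$ we have $\E\norm{\bm{A}x}^4 = (\operatorname{tr} M)^2 + 2\operatorname{tr}(M^2)$, where $M=\bm{A}^T\bm{A}$, and since $\norm{x}$ is independent of $u$ this transfers to $u$ up to the factor $\E\rho^4/(\E\rho^2)^2 = 1+O(1/n)$. The required bound thus reduces to $\operatorname{tr}(M^2) = O((\operatorname{tr}M)^2)$. This holds trivially, because $\operatorname{tr}(M^2)\le(\operatorname{tr}M)^2$ for any positive semidefinite $M$. Hence $\E Z \ge c\,\norm{\bm{A}}_F/\sqrt{n}$ with an absolute constant $c$ (roughly $3^{-1/2}$), uniformly in $\bm{A}$. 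In fact the lower bound $\norm{\bm{A}}_{\E}\gtrsim\norm{\bm{A}}_F/\sqrt n$ is deterministic. This deterministic fact is the right tool for the paper's rank-degenerate case, since there the ratio $\norm{\bm{A}}_F/(\sqrt n\norm{\bm{A}})$ is what becomes small.

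Combining the two bounds gives $c\,\norm{\bm{A}}_F/\sqrt n \le \norm{\bm{A}}_{\E} \le \norm{\bm{A}}$, and the random matrix step shows both ends are $\Theta(\sigma\sqrt n)$ almost surely, which proves the lemma. The main obstacle is the probabilistic input on $\bm{A}$ rather than the norm comparison. We must invoke Bai--Yin, which needs a finite fourth moment, and the SLLN for $\norm{\bm{A}}_F$. Because the lemma's hypotheses are stated loosely, we would state the moment assumption explicitly. If it is unavailable, we would fall back to the weaker almost-sure statement $\norm{\bm{A}}\gtrsim\sigma\sqrt n$, obtained from the row norms, together with a high-probability upper bound.
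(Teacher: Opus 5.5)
Your proof is correct. At the level of strategy it is the paper's: both sides are compared with $\sigma\sqrt n$, the operator norm through Bai--Yin and $\norm{\bm{A}}_{\E}$ through the second moment $\E_x\norm{\bm{A}x}_2^2=\norm{\bm{A}}_F^2\approx\sigma^2n^2$. The paper, however, gets from this second moment to $\E[\norm{\bm{A}x}_2/\norm{x}_2]$ just by dividing by $\norm{x}_2=\Theta(\sqrt n)$. That step tacitly relies on concentration and is left unjustified.

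You make this step exact: rotation invariance reduces the ratio to $\norm{\bm{A}u}_2$, with $\E\norm{\bm{A}u}_2^2=\norm{\bm{A}}_F^2/n$. You also make it rigorous, via the H\"older bound and $\E(x^TMx)^2=(\operatorname{tr}M)^2+2\operatorname{tr}(M^2)$. If you add Jensen, $\E Z\le(\E Z^2)^{1/2}$, you get the deterministic sandwich $\norm{\bm{A}}_F/\sqrt{3n}\le\norm{\bm{A}}_{\E}\le\norm{\bm{A}}_F/\sqrt n$ for every matrix. So the lemma is precisely the statement that the stable rank $\norm{\bm{A}}_F^2/\norm{\bm{A}}^2$ is of order $n$. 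The same sandwich also makes the paper's GQA computation $\norm{\bm{W}_0^\oplus}_{\E}=\Theta(\sigma\sqrt n)$ rigorous.

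Your route also gives the sharp constant with one more line. Use $\operatorname{tr}(M^2)\le\norm{\bm{A}}^2\norm{\bm{A}}_F^2$ instead of $\operatorname{tr}(M^2)\le(\operatorname{tr}M)^2$. This gives $\E Z^4/(\E Z^2)^2\le 1+2\norm{\bm{A}}^2/\norm{\bm{A}}_F^2=1+O(1/n)$. Hence $\norm{\bm{A}}_{\E}=(1+O(1/n))\norm{\bm{A}}_F/\sqrt n$, and $\norm{\bm{A}}_{\E}/\norm{\bm{A}}\to 1/2$ almost surely.

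One correction to your closing remark: there is no fallback if the moment assumption is dropped. Centering and a fourth-moment-type tail condition are needed for the lemma itself, not just for Bai--Yin. The paper's proof uses both tacitly when it asserts $\norm{\bm{A}}=\Theta(\sigma\sqrt n)$. Since $\norm{\bm{A}}_{\E}\le\norm{\bm{A}}_F/\sqrt n$ always, it suffices to exhibit cases with $\norm{\bm{A}}\gg\norm{\bm{A}}_F/\sqrt n$:
\begin{itemize}
\item With mean $\mu\ne 0$, one has $\norm{\bm{A}}_F/\sqrt n=O(\sqrt n)$, but $\norm{\bm{A}}\ge\norm{\bm{A}\bm{1}}_2/\sqrt n\approx|\mu|n$.
\item With centered entries of finite variance but tails $\mathbb{P}(|A_{11}|>t)\asymp t^{-\alpha}$, $2<\alpha<4$, one has $\norm{\bm{A}}_F/\sqrt n\approx\sigma\sqrt n$. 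But $\norm{\bm{A}}\ge\max_{ij}|A_{ij}|$, which is of order $n^{2/\alpha}\gg\sqrt n$ with high probability.
\end{itemize}
In both cases $\norm{\bm{A}}_{\E}/\norm{\bm{A}}\to 0$ in probability. So the upper bound $\norm{\bm{A}}=O(\sigma\sqrt n)$ that your fallback would need is false, and the hypotheses you add belong in the statement of the lemma.
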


\begin{proof}[\textbf{Proof of Lemma~\ref{lem:asymptotc_equivalence_of_norms}}]
    First, note that $\norm{\bm{A}}=\Theta(\sigma\sqrt{n})$, where $\sigma$ is the variance of the i.i.d. entries of $\bm{A}$. Next, observe that we can use the law of large numbers and the central limit theorem to estimate
    \begin{align*}
        \E\norm{Ax}_2^2 = \E\sum_{i}\left(\sum_j A_{ij}x_j\right)^2 = \Theta(\sigma^2n^2),
    \end{align*}
    and the result follows since $\norm{x}_2 = \Theta(\sqrt{n})$.
\end{proof}

\subsection{Intuition behind no exact cancellation}
\label{sec:intuition_cancellation}
The intuition is that, in high-dimensional spaces, independently initialized weight matrices and their gradient updates typically act on inputs in weakly correlated, nearly orthogonal directions. Therefore, when we add two such high-dimensional operators, such as $W_{t-1}$ and $\Delta W_{t-1}$, their geometries are unlikely to align in a way that causes significant cancellation. As a result, the norm of the sum remains on the same order as the combined contribution of the two terms, rather than becoming artificially small.

 This assumption captures a basic stability property of high-dimensional neural networks: when an update is added to a weight matrix, the update and the existing weights should not systematically point in opposite directions. If they did, the update could cancel the weights, causing the activations or backpropagated gradients to shrink across layers or training steps. In the extreme case, repeated cancellation would drive internal signals toward zero, preventing the network from learning useful features from the data.
\section{Experimental Details}
\subsection{Model Configurations}\label{app:model_configs}

In our experiments, we train Transformer language models with untied embeddings and GELU \cite{hendrycks2023gaussianerrorlinearunits} nonlinearity.
The batch size is chosen using a data-driven optimal batch size in~\eqref{eq:batch_tokens} based on the total number of training tokens $n_{tokens}$, where the corresponding sequence length is 8192.

\begin{align} \label{eq:batch_tokens}
    B = 0.000733 \times \sqrt{n_{tokens}}.
\end{align}

Equation~\ref{eq:batch_tokens} follows the isoloss sweep methodology of \cite{bergsma2026power} but uses a rounded exponent for tractability. Specifically, \cite{bergsma2026power} estimates a scaling exponent of 0.46 and recommends rounding to 0.5. Since we ran independent sweeps on our own data, equation~\ref{eq:batch_tokens} is specific to our setup but aligns structurally with \cite{bergsma2026power}.

We use a cosine learning rate schedule with warmup. 
The number of warmup steps follows~\eqref{eq:warmup}, \cite{dey2025don}:

\begin{align} \label{eq:warmup}
    n_{\text{warmup}} = \min(\text{int}(0.02 * n_{\text{training}}), \text{int}(375e6/(B \times L))),
\end{align}
where B is batch size and L is sequence length.

All of our experiments are conducted using the openwebtext dataset \citep{gokaslan2019OpenWeb}.

\begin{table}[t]
\centering
\caption{Model configurations for the coordinate check experiments from Figures~\ref{fig:df_input_norm}, \ref{fig:dw_adam}, \ref{fig:dw_gqa}.}
\label{tab:cc_exp}
\begin{adjustbox}{width=0.54\textwidth}
\begin{tabular}{cccccc}
\toprule
Width & Depth & \begin{tabular}[c]{@{}c@{}}Num\\ Heads\end{tabular} & Head Size & \begin{tabular}[c]{@{}c@{}}KV \\ Heads\end{tabular} & \begin{tabular}[c]{@{}c@{}}KV \\ Reps\end{tabular} \\
\midrule
576   & 8     & 12                                                  & 64        & 1                                                   & 12                                                 \\ 
576   & 8     & 12                                                  & 64        & 2                                                   & 6                                                  \\ 
576   & 8     & 12                                                  & 64        & 3                                                   & 4                                                  \\ 
576   & 8     & 12                                                  & 64        & 4                                                   & 3                                                  \\ 
576   & 8     & 12                                                  & 64        & 6                                                   & 2                                                  \\ 
576   & 8     & 12                                                  & 64        & 12                                                  & 1   
\\ \bottomrule
\end{tabular}
\end{adjustbox}
\end{table}

\subsubsection{Coordinate Checks}\label{app:cc_exp}

For the coordinate checking we verify that our norms remain stable as we vary the nubmer of kv heads. The specific configurations which we used during the coordinate checks are contained in Table~\ref{tab:cc_exp}. We use weight decay $0$ in our coordinate checking experiments, and do all of the computation in \texttt{float32}. We used a fixed Adam $\ep$ of $10^{-12}$ and an initial standard deviation of $0.02$. Other optimizer settings are set to the defaults of \texttt{PyTorch}'s Adam implementation. We perform our experiments on seeds $1$ through $10$ and plot the average and confidence interval. We use a batch size of $1$ and a sequence length of $1024$ to ensure quick computation.

\subsubsection{GQA Ablation Experiment}\label{app:gqa_exp}

We train our GQA ablation models to 10 TPP. The configurations used for this experiment can be found in Table~\ref{tab:gqa_exp}. We set the base weight decay to be $\lambda_0 = 0.1$. We use a base Adam $\ep$ of $10^{-9} / n$, where $n$ is the embedding dimension, to match the predicted Adam $\ep$ scaling of \cite{dey2025don}. We take three runs for each data point, using seeds $42, 43, 44$ for reproducibility.

\begin{table*}[th]
\centering
\caption{Model configurations for the GQA transfer experiments from Figure~\ref{fig:gqa_ablation_no_fsdp}.}
\label{tab:gqa_exp}
\begin{adjustbox}{width=\linewidth}
\begin{tabular}{lrrrrrrrrrrrrrr}
\toprule
           & \multicolumn{1}{c}{Params} & \multicolumn{1}{c}{\begin{tabular}[c]{@{}c@{}}Non-Embd\\  Params\end{tabular}} & \multicolumn{1}{c}{Width} & \multicolumn{1}{c}{Depth} & \multicolumn{1}{c}{\begin{tabular}[c]{@{}c@{}}Num\\ Heads\end{tabular}} & \multicolumn{1}{c}{Head Size} & \multicolumn{1}{c}{\begin{tabular}[c]{@{}c@{}}KV \\ Heads\end{tabular}} & \multicolumn{1}{c}{\begin{tabular}[c]{@{}c@{}}KV \\ Reps\end{tabular}} & \multicolumn{1}{c}{TPP} & \multicolumn{1}{c}{\begin{tabular}[c]{@{}c@{}}Dataset Size \\ (Tokens)\end{tabular}} & \multicolumn{1}{c}{\begin{tabular}[c]{@{}c@{}}Dataset Size\\  (Sequences)\end{tabular}} & \multicolumn{1}{c}{\begin{tabular}[c]{@{}c@{}}Batch Size \\ (Tokens)\end{tabular}} & \multicolumn{1}{c}{\begin{tabular}[c]{@{}c@{}}Batch Size \\ (Sequences)\end{tabular}} & \multicolumn{1}{c}{Iterations} \\ \midrule
kvr\_t\_1  & 125.55                      & 80.62                                                                           & 768                        & 7                          & 12                                                                       & 64                             & 1                                                                        & 12                                                                      & 10                       & 806200000                                                                             & 98413                                                                                    & 262144                                                                              & 32                                                                                     & 3075                            \\ 
kvr\_t\_2  & 126.23                      & 81.31                                                                           & 768                        & 7                          & 12                                                                       & 64                             & 2                                                                        & 6                                                                       & 10                       & 813100000                                                                             & 99255                                                                                    & 262144                                                                              & 32                                                                                     & 3102                            \\ 
kvr\_t\_3  & 126.92                      & 82                                                                              & 768                        & 7                          & 12                                                                       & 64                             & 3                                                                        & 4                                                                       & 10                       & 820000000                                                                             & 100098                                                                                   & 262144                                                                              & 32                                                                                     & 3128                            \\ 
kvr\_t\_4  & 127.61                      & 82.69                                                                           & 768                        & 7                          & 12                                                                       & 64                             & 4                                                                        & 3                                                                       & 10                       & 826900000                                                                             & 100940                                                                                   & 262144                                                                              & 32                                                                                     & 3154                            \\ 
kvr\_t\_6  & 128.99                      & 84.06                                                                           & 768                        & 7                          & 12                                                                       & 64                             & 6                                                                        & 2                                                                       & 10                       & 840600000                                                                             & 102612                                                                                   & 262144                                                                              & 32                                                                                     & 3207                            \\ 
kvr\_t\_12 & 133.12                      & 88.19                                                                           & 768                        & 7                          & 12                                                                       & 64                             & 12                                                                       & 1                                                                       & 10                       & 881900000                                                                             & 107654                                                                                   & 270336                                                                              & 33                                                                                     & 3262                            \\ \bottomrule
\end{tabular}
\end{adjustbox}
\end{table*}

\subsubsection{Weight Decay Transfer Experiment}\label{app:wd_exp}

Due to the high number of sampling points we only trained our models in the weight decay experiments to 3 TPP, well below the compute optimal horizon \citep{hoffmann2022empirical}. The configurations used for this experiment can be found in Table~\ref{tab:wd_exp}. We uniformly sample the grid in $\log-\log$ space. We were only able to run one trial per data point, but the high number of trials increase confidence. We sample 250 points on the grid for each implementation and model size.

\begin{table*}[th]
\centering
\caption{Model configurations for the Weight decay experiments from Figures~\ref{fig:voronoi_wd} and~\ref{fig:voronoi_tau}.}
\label{tab:wd_exp}
\begin{adjustbox}{width=\linewidth}
\begin{tabular}{ccccccccccccccc}
\toprule
           & Params & \begin{tabular}[c]{@{}c@{}}Non-Embd \\ Params\end{tabular} & Width & Depth & \begin{tabular}[c]{@{}c@{}}Num\\ Heads\end{tabular} & \begin{tabular}[c]{@{}c@{}}Head \\ Size\end{tabular} & \begin{tabular}[c]{@{}c@{}}KV \\ Heads\end{tabular} & \begin{tabular}[c]{@{}c@{}}KV \\ Reps\end{tabular} & TPP & \begin{tabular}[c]{@{}c@{}}Dataset Size \\ (Tokens)\end{tabular} & \begin{tabular}[c]{@{}c@{}}Dataset Size\\  (Sequences)\end{tabular} & \begin{tabular}[c]{@{}c@{}}Batch Size \\ (Tokens)\end{tabular} & \begin{tabular}[c]{@{}c@{}}Batch Size \\ (Sequences)\end{tabular} & Iters. \\ \midrule
jwd-small  & 48.82  & 26.38                                                      & 384   & 4     & 6                                                   & 64                                                   & 6                                                   & 1                                                  & 3   & 79140000                                                         & 9661                                                                & 81920                                                          & 10                                                                & 966    \\ 
jwd-medium & 125.96 & 81.07                                                      & 768   & 6     & 12                                                  & 64                                                   & 12                                                  & 1                                                  & 3   & 243210000                                                        & 29689                                                               & 147456                                                         & 18                                                                & 1649   \\ 
jwd-large  & 237.17 & 177.31                                                     & 1024  & 10    & 16                                                  & 64                                                   & 16                                                  & 1                                                  & 3   & 531930000                                                        & 64933                                                               & 212992                                                         & 26                                                                & 2497   \\ \bottomrule
\end{tabular}
\end{adjustbox}
\end{table*}

\begin{figure*}[!t]
    \centering
    \includegraphics[width=0.84\linewidth]{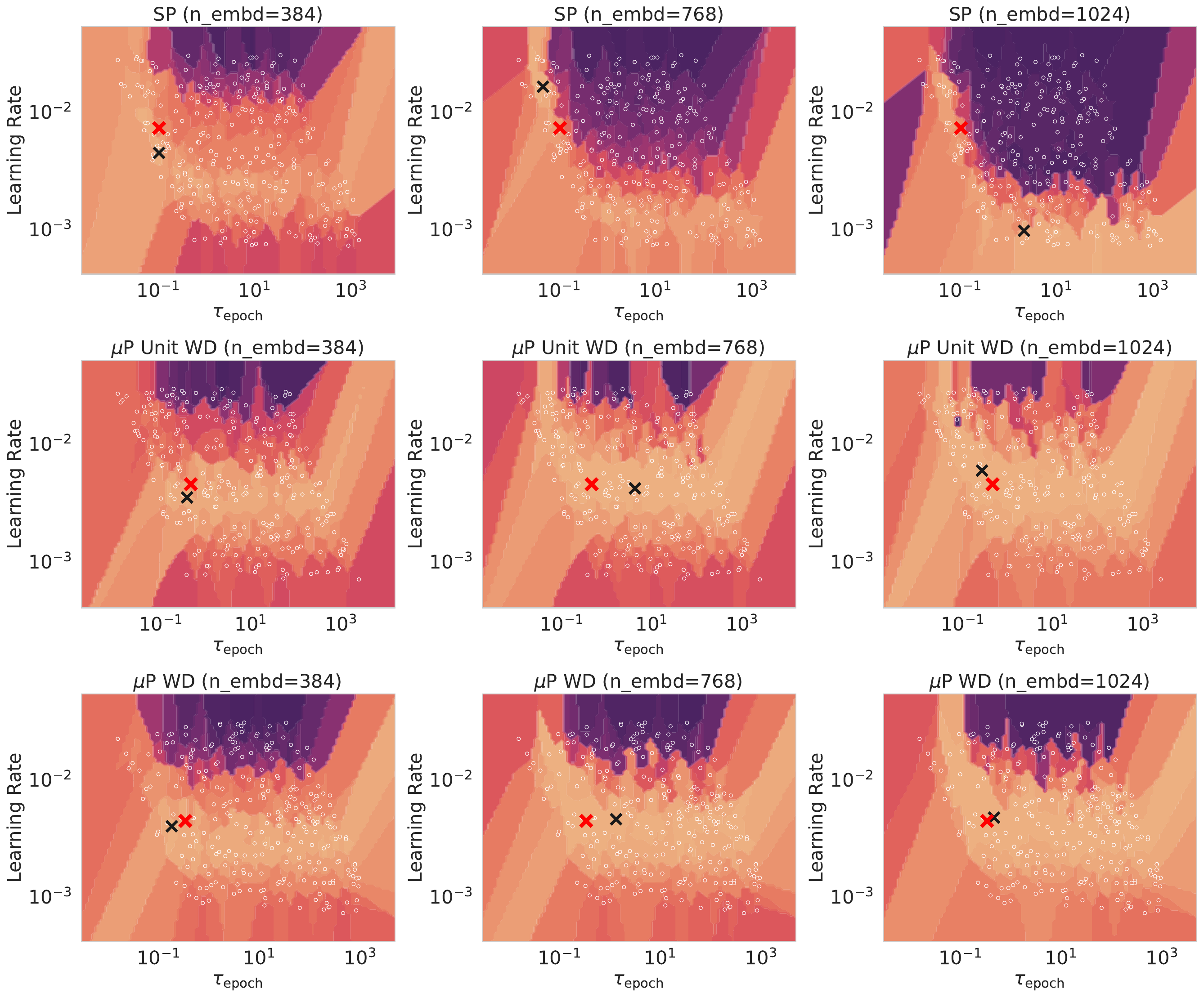}
    \caption{Voronoi interpolation for random sweeps over both learning rate and $\tau_{\text{epoch}}$ \cite{wang2024set}. The top row is standard parameterization. The middle row is the vanilla Adam-\mup implementation suggested in \cite{yangtensorv}. The bottom row is our proposed implementation. Each column is a different size model, increasing in number of parameters from left to right. For each model and implementation we plot the best trial. We scale the hidden dimension, depth, batch size, and training iterations.
    Lighter colors are lower loss, darker colors are higher loss. The red x is the average \texttt{(learning rate, weight decay)} pair, where each coordinate is averaged over the model sizes, while the black x is the optimal pair for each experiment.}
    \label{fig:voronoi_tau}
\end{figure*}

\begin{figure*}[!t]
    \centering
    \includegraphics[width=0.84\linewidth]{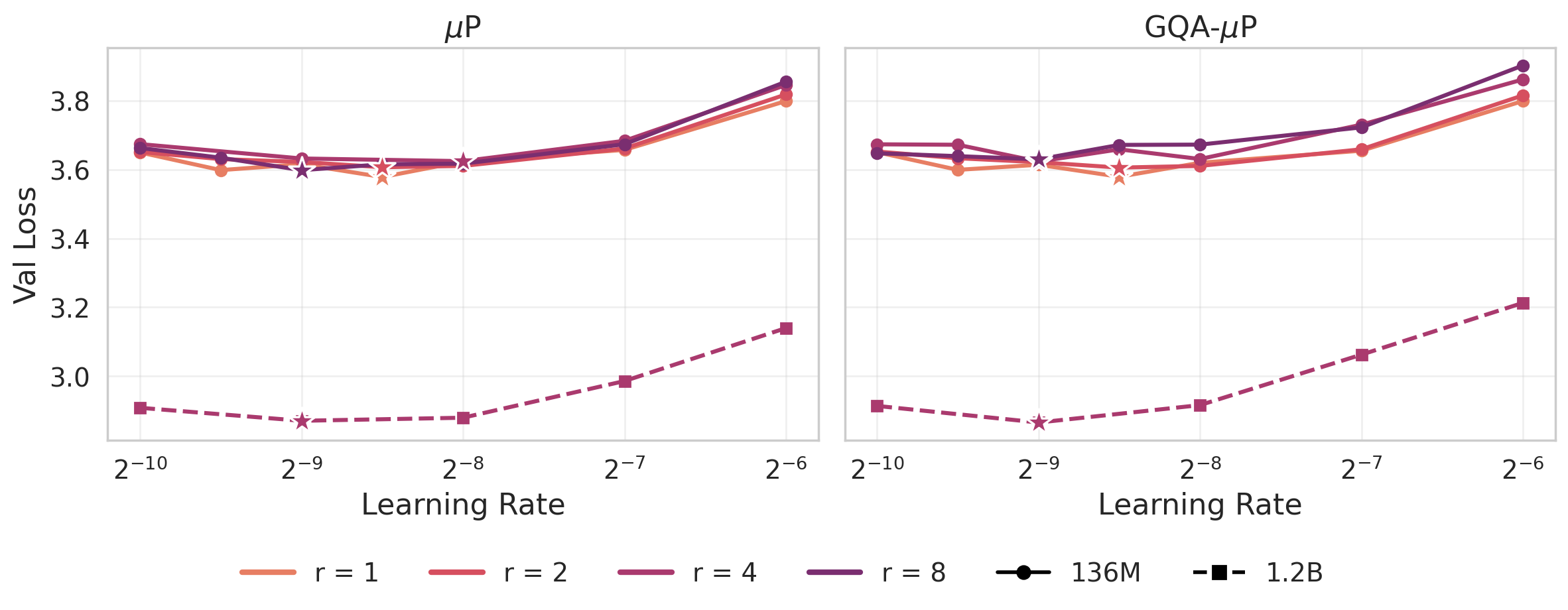}
    \caption{Learning-rate transfer at 20 tokens-per-parameter (TPP) under vanilla Adam-\mup (left) and GQA-\mup (right).
    The 136M proxy is swept at $r\in\{1,2,4,8\}$ on a half-power LR grid and the 1.2B target is swept at $r{=}4$ on whole-power steps.
    Colour encodes $r$ and linestyle encodes model size; stars mark the optimal learning rate for each configuration.
    The 1.2B target optimum lands at $2^{-9}$ under both parameterizations.
    Under vanilla \mup the 136M proxy optima drift to $2^{-8.5}$ for three of the four $r$ values, whereas under GQA-\mup two of the four proxy optima coincide with the target at $2^{-9}$; the residual $r$-dependence is smaller but not fully eliminated at this training horizon.}
    \label{fig:tpp20_lr_transfer}
\end{figure*}

\subsection{Failure of Yang-Type Coordinate Checking}\label{app:yang_coordinate}
\cite{yangtensorv} suggest measuring $\norm{h_t}_2$ and $\norm{\Delta h_t}_2$ to verify that a \mup implementation is correct by comparing these norms during training to feature learning conditions in~\eqref{eq:yang_feature}. In Figure~\ref{fig:df_input_norm} we plot $\norm{\Delta h_t}_2$ for the vanilla Adam-\mup implementation while varying only the number of kv heads. Note that the the implementation passes a coordinate check, but as discussed theoretically in Section~\ref{sec:derivation}, and empirically in Section~\ref{sec:empirical} the learning rate does not transfer for this implementation (see Figure~\ref{fig:gqa_ablation_no_fsdp}). 

Coordinate checks on our proposed spectral condition from~\eqref{eq:spectral_condition}, however, capture the failure of feature learning (see Figure~\ref{fig:dw_adam}).

\subsection{Transfer at Compute-Optimal Training Horizons}
To check that the behaviour observed at 10~TPP persists at more realistic training lengths, we repeat the learning-rate sweep at 20~TPP, scaling from a 136M proxy model to a 1.2B target model across $r\in\{1,2,4,8\}$.
Figure~\ref{fig:tpp20_lr_transfer} reports the results on a half-power LR grid for the 136M proxy and whole-power steps for the 1.2B target.
The 1.2B target optimum lands at $2^{-9}$ under both parameterizations, so the transfer question reduces to whether the 136M proxy identifies the same learning rate.
Under vanilla \mup, three of the four 136M proxy optima $r\in\{1,2,4\}$ drift to $2^{-8.5}$, half a power above the 1.2B target, while only $r{=}8$ recovers $2^{-9}$.
Under GQA-\mup, two of the four proxy optima $r\in\{4,8\}$ coincide with the target at $2^{-9}$, and the remaining two $r\in\{1,2\}$ sit at $2^{-8.5}$.
GQA-\mup therefore reduces but does not eliminate the residual $r$-dependence at this horizon. Combined with the coordinate-check result in Figure~\ref{fig:dw_gqa}, this still favours GQA-\mup as the theoretically correct choice, particularly at shorter training horizons where the bias under vanilla \mup is larger as shown in Figure~\ref{fig:gqa_ablation_no_fsdp}.

\subsection{More Results about Weight Decay}
\begin{table}[th]
\footnotesize
\centering
\caption{Variance table comparing our implementations across model sizes for the weight decay experiment from Figure~\ref{fig:voronoi_wd}.}
\label{tab:voronoi_batch}
\begin{tabular}{llll}
\toprule
Implementation           & Var. LR              & Var. WD              & Var. Loss            \\ 
\midrule
SP                       & $1.34$               & $3.83\times 10^{-1}$ & $4.87\times 10^{-1}$ \\ 
$\mu$P                   & $4.75\times 10^{-2}$ & $1.38$               & $4.87\times 10^{-1}$ \\ 
$\mu$P + WD              & $5.54\times 10^{-3}$ & $7.51\times 10^{-1}$ & $4.77\times 10^{-1}$ \\ 
\bottomrule
\end{tabular}%
\end{table}

We used the same data that was collected from Figure~\ref{fig:voronoi_wd} to analyze whether or not our experimental testbed demonstrates transfer over $\tau_{\text{epoch}}$, as is suggested by \citep{wang2024set, bergsma2025power, dey2025don}. We find that we get slightly better transfer in $\tau_{\text{epoch}}$ than we do with weight decay, using the same data. We plot the variance in our optimal configurations in Table~\ref{tab:voronoi_tau}. Like for the case of weight decay transfer (see Figure~\ref{fig:voronoi_wd}), we find that our suggested implementation outperforms both the standard parameterization and the vanilla Adam-\mup implementation from \cite{yangtensorv}.

\section{LLM Statement}
We did not use LLMs in a significant way to aid our research during the completion of this work. Our LLM usage did not extend beyond using code assistants like copilot and for polishing the writing in our manuscript.

\end{document}